\documentclass[11pt]{article}

\usepackage{amssymb,amsmath,amsthm,bbm,bm}
\usepackage{verbatim,float,url,dsfont}
\usepackage{graphicx,subcaption,psfrag}
\usepackage{algorithm}
\usepackage[noend]{algorithmic}
\usepackage{mathtools,enumitem}
\usepackage{multirow}
\usepackage{ragged2e}
\usepackage{xr-hyper}
\usepackage{caption}
\usepackage{array}

\usepackage[utf8]{inputenc} 
\usepackage[T1]{fontenc}    
\usepackage{booktabs}       
\usepackage{nicefrac}         
\usepackage{microtype}      

\usepackage[dvipsnames]{xcolor} 
\definecolor{hunyuanblue}{HTML}{1E4A8F}
\usepackage{pifont} 
\usepackage{etoc} 
\usepackage{tikz} 
\usepackage{pgfplots}  
\pgfplotsset{compat=1.16}  

\ifdefined\TimesFont 
\usepackage{times} 
\fi

\ifdefined\ParSkip 
\usepackage{parskip} 
\fi

\newtheorem*{assumption*}{\assumptionnumber}
\providecommand{\assumptionnumber}{}


\makeatletter
\newcommand*\rel@kern[1]{\kern#1\dimexpr\macc@kerna}
\newcommand*\widebar[1]{%
  \begingroup
  \def\mathaccent##1##2{%
    \rel@kern{0.8}%
    \overline{\rel@kern{-0.8}\macc@nucleus\rel@kern{0.2}}%
    \rel@kern{-0.2}%
  }%
  \macc@depth\@ne
  \let\math@bgroup\@empty \let\math@egroup\macc@set@skewchar
  \mathsurround\z@ \frozen@everymath{\mathgroup\macc@group\relax}%
  \macc@set@skewchar\relax
  \let\mathaccentV\macc@nested@a
  \macc@nested@a\relax111{#1}%
  \endgroup
}
\makeatother

\DeclareMathOperator{\sign}{sign}

\def\E{\mathbb{E}}

\def\eps{{\epsilon}}
\def\1{\bm{1}}

\usepackage{hunyuan}

\usepackage[capitalize,noabbrev]{cleveref}
\usepackage{tabularx}
\usepackage[most]{tcolorbox}
\usepackage{wrapfig}
\usepackage{xspace}
\usepackage{fancyhdr}
\usepackage{xurl}

\makeatletter
\long\def\@makecaption#1#2{%
  \vskip 10pt
  \setbox\@tempboxa\hbox{#1: #2}%
  \ifdim \wd\@tempboxa >\hsize
    \noindent #1: #2\par   
  \else
    \hbox to\hsize{\hfil\box\@tempboxa\hfil}
  \fi}
\makeatother

\hypersetup{
  colorlinks=true,
  linkcolor=hunyuanblue,
  citecolor=hunyuanblue,
  urlcolor=hunyuanblue,
}

\makeatletter
\def\section{\@startsiction{section}{1}{\z@}{-0.24in}{0.10in}
             {\large\bf\raggedright\color{hunyuanblue}}}
\def\subsection{\@startsection{subsection}{2}{\z@}{-0.20in}{0.08in}
                {\normalsize\bf\raggedright\color{hunyuanblue}}}
\makeatother

\fancypagestyle{plain}{%
  \fancyhf{}%
  \fancyfoot[C]{\thepage}%
}

\fancypagestyle{firststyle}{%
  \fancyhf{}%
  \fancyhead[L]{\raisebox{-18.5mm}[0pt][0pt]{\includegraphics[height=12mm]{figs/hunyuan-logo.png}}}%
  \fancyfoot[C]{\thepage}%
}

\newcommand{\bmu}{\mu}                       
\newcommand{\tpi}{\pi}                        
\newcommand{\Adv}{\hat{A}}                    
\newcommand{\KL}{D_{\mathrm{KL}}}
\newcommand{\KLtopk}{\KL^{\mathrm{TopK}}}     
\newcommand{\Dtok}{D}                         
\newcommand{\Taylor}{\dot{D}}                 
\newcommand{\Keep}{\mathcal{K}}               

\newcolumntype{Y}{>{\raggedright\arraybackslash}X}

\definecolor{caseblue}{RGB}{42, 91, 160}
\definecolor{casebg}{RGB}{246, 248, 252}
\definecolor{caseborder}{RGB}{180, 195, 220}
\definecolor{paperblue}{HTML}{1F77B4}
\definecolor{paperred}{HTML}{D62728}
\definecolor{deepred}{HTML}{B22222}
\definecolor{softred}{HTML}{C44E52}

\newtcolorbox[auto counter, number within=section]{compactcase}[2][]{
  breakable,
  enhanced,
  colback=gray!2,
  colframe=gray!30,
  colbacktitle=gray!12,
  coltitle=black, 1
  fonttitle=\bfseries,
  title={#2},
  boxrule=0.45pt,
  arc=1mm,
  left=1.5mm,
  right=1.5mm,
  top=1mm,
  bottom=1mm,
  toptitle=0.7mm,
  bottomtitle=0.7mm,
  before skip=0.8em,
  after skip=0.8em,
  label={#1}
}

\definecolor{abstractbg}{HTML}{F0F7FC}

\begin{document}

\thispagestyle{firststyle}
\vspace*{0.25cm}
{\color{hunyuanblue}\hrule height 0.6pt}
\vskip 6mm
\begin{center}
{\LARGE\bfseries Predictive Divergence Masks for LLM RL\par}
\end{center}
\vskip 3mm
{\color{hunyuanblue}\hrule height 0.6pt}
\vskip 6mm
\begin{center}
\textbf{Xiangxin Zhou}$^{1,*}$ \quad
\textbf{Jiarui Yao}$^{1,2,*}$ \quad
\textbf{Penghui Qi}$^{3,*}$ \quad
\textbf{Bowen Ping}$^{1}$ \\
\textbf{Jiaqi Tang}$^{1}$ \quad
\textbf{Haonan Wang}$^{1}$ \quad
\textbf{Tianyu Pang}$^{1,\ddagger}$
\\[8pt]
$^1$Tencent Hunyuan \quad $^2$UIUC \quad $^3$NUS \\[6pt]
{\small $^*$Equal contribution \quad
$^\ddagger$Corresponding author}
\end{center}
\vskip 6mm

\begin{tcolorbox}[
  colframe=abstractbg,
  colback=abstractbg,
  boxrule=0pt,
  arc=2mm,
  enhanced,
  top=12pt,
  bottom=12pt,
  left=15pt,
  right=15pt,
  width=\textwidth,
]
\textbf{Abstract.}\quad
Reinforcement learning for large language models (LLMs) typically relies on
trust-region masks to stabilize off-policy updates.
The dominant PPO-style approach uses the sampled-token importance ratio for two criteria: a \emph{proximity} criterion, which asks whether the policy has moved too far from the behavior policy, and a \emph{direction} criterion, which asks whether the update pushes it farther away.
Recent work DPPO improves the proximity criterion by replacing PPO's ratio-based test with a probability divergence between the behavior and training policies.
However, its direction criterion is still inherited from PPO.
A token can be masked only when the sampled-token importance ratio moves away from one.
We observe that this ratio-based direction criterion is a single-sample proxy that can disagree in sign with the change of the divergence that defines the proximity criterion. 
We therefore propose the \emph{predictive divergence mask}, which asks whether
the next policy-gradient step will increase or decrease the same divergence used
by the trust region. For the discrete softmax policies used in LLM RL, 
we derive this prediction in closed form. Because production rollout engines expose only a truncated (top-$K$) view of the vocabulary, we develop two lightweight top-$K$ estimators for this prediction.
Detailed analysis shows the
divergence-based direction is better aligned with the realized change of the divergence
than the sampled ratio, and the resulting masks improve RL 
training across model scales and precision settings.

\vskip 8pt
\textbf{Date:} July 7, 2026 
\end{tcolorbox}

\section{Introduction}
\label{sec:intro}

Reinforcement learning (RL) has become a central post-training tool for improving
the reasoning and alignment behavior of large language models
(LLMs)~\citep{ouyang2022training,guo2025deepseekr1,grpo}. 
In this setting, an LLM is optimized as an autoregressive token-level policy
against scalar sequence-level rewards.
In practice, however, the data used for optimization is not generated by exactly
the same policy being updated.
Rollouts are often produced by inference engines whose numerical behavior differs
from the training stack, creating a training-inference mismatch~\citep{qi2025defeating,yao2025offpolicy}.
Policy staleness adds another mismatch, since the policy is typically updated multiple
times on minibatches drawn from a fixed batch of rollout data~\citep{zheng2025stabilizing}.
Together, these effects make practical LLM RL inherently \emph{off-policy}.

Widely used methods~\citep{ppo,grpo,yu2025dapo,cispo,drgrpo} therefore follow a
common trust-region recipe of optimizing a clipped or masked surrogate objective
on rollouts collected by the behavior policy.
Such masks are \emph{asymmetric}.
Being outside the trust region is not sufficient for masking.
A token is masked only when its advantage-weighted update would further increase
the relevant deviation from the behavior policy; updates that reduce this
deviation remain active.
Thus a mask has two components:
the \emph{proximity} criterion asks whether the trust region is exceeded, and
the \emph{direction} criterion asks whether the update points further outward.
In Proximal Policy Optimization (PPO)~\citep{ppo}, both decisions are made from
the sampled token's importance ratio.
Once the ratio leaves a fixed window around one and the advantage pushes it
further away, the token is masked and its gradient vanishes.

The sampled importance ratio, however, is a poor proxy for distributional shift
in LLMs~\citep{dppo}.
It observes only the sampled token and ignores how probability mass moves over
the rest of the vocabulary.
Recent divergence-mask methods such as DPPO therefore replace the ratio-based
proximity criterion with a divergence between the behavior and training
policies~\citep{dppo}.
This gives a more faithful trust region, typically estimated from the top-$K$
log-probabilities exposed by rollout engines.
This upgrade, however, changes only the proximity criterion.
The direction criterion remains inherited from PPO.
For PPO, this direction criterion is consistent with the proximity criterion,
because both are defined by the sampled importance ratio.
For a divergence mask, however, the proximity criterion is defined by a
distributional divergence.
Whether the next update increases that divergence depends not only on the
sampled token but also on how the whole softmax distribution changes.
Thus DPPO makes the proximity criterion distributional, but leaves the direction
criterion single-sample.

To address the aforementioned inconsistency, we define the direction criterion by asking whether the next gradient
step will \emph{increase the divergence}, and mask a token according to the sign
of this predicted change.
We call the resulting rule the \emph{predictive divergence mask}.
For the softmax policies that every autoregressive LLM uses, this first-order
change has a clean closed form whose coefficient is a simple function of the
current token probabilities.
The coefficient splits into two terms.
A \emph{local} term at the sampled token is exactly the quantity the
ratio-based direction criterion already reads.
A \emph{global} term captures the softmax normalization coupling across the
vocabulary, which a single sampled ratio cannot observe.
Keeping only the local term recovers the ratio-based direction criterion, while
keeping both lets the sign track the true change of the divergence.
In practice, rollout engines usually expose only top-$K$ token probabilities
rather than the full vocabulary distribution.
We therefore adapt the predictive criterion to this truncated view with two
lightweight estimators for the unseen tail.
They correspond to an aggregated-tail approximation and a uniform-tail
approximation, respectively.
Both estimators use only simple reductions over the retained probabilities and
add negligible computational overhead.
Empirically, detailed analysis shows that the direction
criterion used by the predictive divergence mask is better aligned with the
realized change of the divergence than the sampled-ratio direction criterion.
Across model scales and precision settings, the predictive divergence mask
improves training stability and effectiveness over the baselines.

\section{Background}
\label{sec:background}

\noindent\textbf{LLM RL as token-level policy optimization.}
We follow the standard setup for RL fine-tuning of autoregressive language
models~\citep{ouyang2022training,grpo,dppo}. Given a prompt $x$, the policy $\tpi$
generates a response $y=(y_1,\dots,y_T)$ token by token.
We write $s_t=(x,y_1,\dots,y_{t-1})$ for the state at step $t$, so that
$\tpi(y\mid x)=\prod_{t=1}^{T}\tpi(y_t\mid s_t)$. A scalar reward $R(x,y)$ is
returned after the full response, and the objective is
$\mathcal{J}(\tpi)=\E_{y\sim\tpi}[R(x,y)]$. Responses are sampled from a \emph{behavior} (or rollout) policy $\bmu$,
typically a slightly stale copy of the current policy served by an optimized
inference engine.
The per-token importance ratio is
\begin{equation}
\label{eq:ratio}
r_t \;=\; \frac{\tpi(y_t\mid s_t)}{\bmu(y_t\mid s_t)}.
\end{equation}
Critic-free estimators such as GRPO~\citep{grpo} and its variants
\citep{ahmadian2024back,drgrpo} form the advantage $\Adv_t$ from group-relative
returns.
We treat $\Adv_t$ as given and inherit it unchanged from the underlying
algorithm.

\noindent\textbf{PPO and the ratio-based trust region.}
PPO~\citep{ppo} maximizes a clipped surrogate,
\begin{equation}
\label{eq:ppo}
L^{\mathrm{PPO}}(\tpi)=\E_{y\sim\bmu}\!\left[\sum_{t=1}^{|y|}
\min\!\Big(r_t\,\Adv_t,\;\operatorname{clip}(r_t,1-\eps,1+\eps)\,\Adv_t\Big)\right],
\end{equation}
which can equivalently be written as a masked policy-gradient objective
\begin{equation}
\label{eq:ppo-mask}
\begin{aligned}
&L^{\mathrm{PPO}}(\tpi)=\E_{y\sim\bmu}\!\left[\sum_{t=1}^{|y|} M_t^{\mathrm{PPO}}\,r_t\,\Adv_t\right],
\\[4pt]
&\text{where}\quad M_t^{\mathrm{PPO}}=
\begin{cases}
0, & \sign\big(\Adv_t\,(r_t-1)\big)>0 \;\text{ and }\; |r_t-1|>\eps,\\[2pt]
1, & \text{otherwise.}
\end{cases}
\end{aligned}
\end{equation}
The two factors of the mask $M_t^{\mathrm{PPO}}$ are the
\emph{proximity} criterion $|r_t-1|>\eps$ and the \emph{direction} criterion
$\sign\big(\Adv_t\,(r_t-1)\big)>0$, where $\sign(x)\in\{-1,0,+1\}$ returns the sign of $x$.
The proximity criterion measures how far the
training policy is from the behavior policy, but only through a single sampled token. Notably,
$D_{\mathrm{TV}}\big(\bmu(\cdot\mid s_t)\,\|\,\tpi(\cdot\mid s_t)\big)=
\tfrac12\,\E_{y_t\sim\bmu}\big[\,|r_t-1|\,\big]$,
which indicates that $|r_t-1|/2$ is a one-sample Monte Carlo estimate of the total variation distance.
Thus, PPO in effect gates on a noisy single-sample estimate of a
divergence~\citep{dppo}. 
The direction criterion encodes the asymmetric nature of trust-region control.
The mask acts only on outward-pointing updates.
Only updates that drive the policy \emph{further} from the behavior policy
threaten the trust region and need to be restrained.
Updates that pull the policy back toward it relax the trust-region constraint
and should be kept by the mask.
The sign of $\Adv_t(r_t-1)$ identifies whether the update is outward-pointing at
the sampled token.
It is positive exactly when $\Adv_t>0$ raises a probability already above the
behavior policy ($r_t>1$), or when $\Adv_t<0$ lowers a probability already below
it ($r_t<1$).
The PPO mask therefore removes only token updates that are both outside the trust
region and outward-pointing, while keeping corrective updates.

\noindent\textbf{DPPO and the divergence-based trust region.}
DPPO~\citep{dppo} keeps this masked structure and its asymmetric direction
criterion, but rebuilds the proximity criterion around the full divergence between
the two policies. Instead of gating on the single-sample estimate $|r_t-1|>\eps$, it
measures the per-token divergence directly,
$\Dtok_t=\Dtok\big(\bmu(\cdot\mid s_t)\,\|\,\tpi(\cdot\mid s_t)\big)$,
and uses $\Dtok_t>\delta$ as the proximity criterion.
This gives
\begin{equation}
\label{eq:dppo}
\begin{aligned}
&L^{\mathrm{DPPO}}(\tpi)=\E_{y\sim\bmu}\!\left[\sum_{t=1}^{|y|} M_t\,r_t\,\Adv_t\right],
\\[4pt]
&\text{where}\quad M_t=
\begin{cases}
0, & \sign\big(\Adv_t\,(r_t-1)\big)>0 \;\text{ and }\; \Dtok_t>\delta,\\[2pt]
1, & \text{otherwise.}
\end{cases}
\end{aligned}
\end{equation}
A common choice, and the one we use throughout, sets
$\Dtok=\KL(\bmu\,\|\,\tpi)$, the forward KL, which controls total variation
through Pinsker's inequality.

The full-vocabulary KL is not computable from a production rollout, which exposes
only the $K$ largest log-probabilities per step. The top-$K$ divergence addresses
this by restricting attention to a reduced support
$\Keep=\operatorname{TopK}(\bmu(\cdot\mid s_t),K)\cup\{y_t\}$, the $K$ highest-probability
tokens under the behavior policy augmented with the sampled token $y_t$ when it falls
outside the top $K$, and collapsing everything outside
$\Keep$ into a single aggregated tail bucket carrying the residual mass
$\bmu_{\mathrm{tail}}=1-\sum_{i\in\Keep}\bmu_i$ (and likewise $\tpi_{\mathrm{tail}}$). With this reduction,
\begin{equation}
\label{eq:topk-kl}
\KLtopk(\bmu\,\|\,\tpi)
\;=\; \sum_{i\in\Keep}\bmu_i\log\frac{\bmu_i}{\tpi_i}
\;+\; \bmu_{\mathrm{tail}}\log\frac{\bmu_{\mathrm{tail}}}{\tpi_{\mathrm{tail}}},
\end{equation}
which is a lower bound on the true KL (by the log-sum inequality) and, since
top-$K$ typically captures more than $99\%$ of the probability mass, an extremely
tight one~\citep{dppo}. Instantiating the mask of Eq.~(\ref{eq:dppo}) with this top-$K$ divergence, $\Dtok=\KLtopk$, gives the DPPO-TopK-KL method that we take as the starting point and primary baseline for this work.

DPPO upgrades only the proximity criterion.
The direction criterion is inherited from PPO unchanged, still flagging a token as
eligible for masking exactly when
$\sign\big(\Adv_t\,(r_t-1)\big)>0$.
The next section examines whether this sampled-ratio direction criterion tracks
the sign of the divergence change, and derives a replacement from the divergence
itself.

\section{Methodology}
\label{sec:method}

The divergence mask of Eq.~(\ref{eq:dppo}) upgrades the trust-region
\emph{proximity} criterion from a sampled-ratio proxy to a distributional
divergence, but leaves the
\emph{direction} criterion $\sign(\Adv_t(r_t-1))$ untouched. This section makes the
direction criterion consistent with the divergence-based proximity criterion. The
principle is to ask whether the next gradient step would actually increase the
divergence (\S\ref{sec:method-principle}).
We answer it with the divergence's own directional derivative, which we derive in
closed form (\S\ref{sec:method-derivation})
and then recover from the truncated top-$K$ statistics a rollout reports
(\S\ref{sec:method-estimators}). The result is a drop-in replacement for the direction
criterion, together with an analysis of when it agrees with the ratio-based one and when
it does not (\S\ref{sec:method-mask}).

\subsection{From ``will the ratio move away from one?'' to ``will the divergence grow?''}
\label{sec:method-principle}

The direction criterion determines whether keeping a token update would drive
the policy \emph{further} outside the trust region.
Under a ratio-based trust region, this question is about $|r_t-1|$.
The criterion $\sign(\Adv_t(r_t-1))$ answers it exactly because it is computed from the same ratio that defines the trust region.
Its sign therefore reports whether the update increases $|r_t-1|$.
Under a divergence-based trust region, the proximity criterion is instead defined by $\Dtok_t$, which aggregates the gap between the two policies over the entire vocabulary, while $\sign(\Adv_t(r_t-1))$ still looks only at the single sampled token. 
Its sign can therefore disagree with the sign of the actual change in $\Dtok_t$, 
so the ratio-based direction criterion no longer reliably indicates whether the update drives the policy further outside the trust region.

We therefore define the direction criterion directly in terms of $\Dtok_t$.
It asks whether the next gradient step increases $\Dtok_t$.
Let $\tpi_\eta$ denote the training policy after moving its logits along the
surrogate gradient by an effective step $\eta\ge0$.
Expanding the divergence along this step,
\begin{equation}
\label{eq:taylor-def}
\Dtok_t(\tpi_\eta) \;=\; \Dtok_t \;+\; \eta\cdot \frac{d}{d\eta}\Dtok_t(\tpi_\eta)\Big|_{\eta=0} \;+\; O(\eta^2),
\end{equation}
the sign of the first-order term determines the local trend of $\Dtok_t$.
A positive sign means the update increases $\Dtok_t$, and a negative sign means
it decreases $\Dtok_t$.
A token already outside the trust region should therefore be blocked when this
term is positive, since the update would push $\Dtok_t$ still higher, and kept
when it is negative, since the update would pull $\Dtok_t$ back down.
We replace the ratio-based direction criterion with the sign of this first-order
change, the divergence's own response to the update rather than a single-sample
proxy for it.

\subsection{The directional derivative of the divergence}
\label{sec:method-derivation}

We work at a single token and drop the index $t$ for brevity. Let
$\tpi_i = e^{z_i}/\sum_j e^{z_j}$ be the training policy over the vocabulary,
with logits $z_i$, and let $k$ be the sampled action (the token $y_t$ of the
underlying step). The per-token surrogate
objective $r\,\Adv$ has gradient $\Adv\,r\,\nabla_z\log\tpi_k$ with respect to the
logits, where $\nabla_{z_i}\log\tpi_k=\1[i=k]-\tpi_i$. A single gradient step therefore moves the
logits along the direction $\1[i=k]-\tpi_i$, scaled by $\Adv\,r$.
Since $r>0$, the \emph{sign} of the step is $\sign(\Adv)$.
We compute the directional derivative for a
unit step in $\bm{v}_i=\1[i=k]-\tpi_i$ and recombine the advantage sign afterwards.

\begin{proposition}[Directional derivative of $\KL(\bmu\,\|\,\tpi)$ along a policy-gradient step]
\label{prop:dir-deriv}
Let $\tpi_\eta$ be the softmax policy whose logits are $z_i+\eta\,(\1[i=k]-\tpi_i)$.
Then
\begin{equation}
\label{eq:taylor-coeff}
\Taylor \;\equiv\; \frac{d}{d\eta}\,\KL(\bmu\,\|\,\tpi_\eta)\Big|_{\eta=0}
\;=\; (\tpi_k-\bmu_k) \;+\; \sum_i \tpi_i\,(\bmu_i-\tpi_i).
\end{equation}
\end{proposition}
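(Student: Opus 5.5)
The plan is a direct chain-rule computation. First I isolate the $\eta$-dependence of the divergence. Since $\bmu$ does not depend on $\eta$, I write
\[
\KL(\bmu\,\|\,\tpi_\eta)=\sum_i \bmu_i\log\bmu_i-\sum_i\bmu_i\log\tpi_{\eta,i}.
\]
Only the cross-entropy term varies, so
\[
\Taylor=-\sum_i \bmu_i\,\frac{d}{d\eta}\log\tpi_{\eta,i}\Big|_{\eta=0}.
\]

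Next I compute the log-softmax derivative. The logits move as $z_j(\eta)=z_j+\eta v_j$ with $v_j=\1[j=k]-\tpi_j$, where $\tpi$ is held fixed at its $\eta=0$ value, so the direction is constant in $\eta$. Using $\partial\log\tpi_i/\partial z_j=\1[i=j]-\tpi_j$ and contracting with $v$ gives
\[
\frac{d}{d\eta}\log\tpi_{\eta,i}\Big|_{\eta=0}=v_i-\sum_j\tpi_j v_j .
\]
Substituting $v$, this equals $\1[i=k]-\tpi_i-\tpi_k+\sum_j\tpi_j^2$, because $\sum_j\tpi_j v_j=\tpi_k-\sum_j\tpi_j^2$.

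Then I contract with $\bmu$ and use $\sum_i\bmu_i=1$:
\[
\Taylor=-\Big(\bmu_k-\sum_i\bmu_i\tpi_i-\tpi_k+\sum_j\tpi_j^2\Big)
=(\tpi_k-\bmu_k)+\sum_i\tpi_i\bmu_i-\sum_i\tpi_i^2 .
\]
Finally I regroup the last two sums as $\sum_i\tpi_i(\bmu_i-\tpi_i)$, which is exactly Eq.~(\ref{eq:taylor-coeff}).

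The computation is routine, and the only step that needs care is the bookkeeping in the derivative step. There are two points to get right. The first is using the invariance of softmax to constant logit shifts, which is what makes $\sum_j\tpi_j v_j$ appear with the right sign. The second is remembering that the direction $v$ is frozen at $\eta=0$, so no second-order terms enter. I will also note that differentiability holds wherever $\tpi_i>0$ on the support of $\bmu$, and this is automatic for softmax.
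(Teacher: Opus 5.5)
Your proof is correct and follows the paper's argument essentially step for step: you drop the $\eta$-independent term $\sum_i\bmu_i\log\bmu_i$, apply the log-softmax sensitivity $v_i-\sum_j\tpi_j v_j$, contract with $\bmu$ using $\sum_i\bmu_i=1$, and regroup. The only difference is that the paper proves this sensitivity as Lemma~\ref{lem:softmax-sensitivity} by differentiating the log-normalizer, while you get it from the Jacobian contracted with $\bm v$; one small quibble is that the $\sum_j\tpi_j v_j$ term comes from differentiating the log-partition function rather than from shift invariance, though this does not affect the computation.
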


We denote this directional derivative $\Taylor$ throughout and defer its derivation to
Appendix~\ref{app:derivations}.

\noindent\textbf{What the ratio-based direction criterion misses.}
The coefficient $\Taylor$ decomposes into two terms with distinct scope,
\begin{equation}
\label{eq:decomp}
\Taylor \;=\; \underbrace{(\tpi_k-\bmu_k)}_{\text{local: sampled token } k}
\;+\; \underbrace{\sum_i \tpi_i(\bmu_i-\tpi_i)}_{\text{global: normalization coupling}},
\end{equation}
and the split is exactly the boundary between what a single sample observes and what
it cannot. The local term is the probability gap at the sampled token. Since
$r-1=(\tpi_k-\bmu_k)/\bmu_k$ with $\bmu_k>0$,
\begin{equation}
\label{eq:local-is-ratio}
\sign(\tpi_k-\bmu_k)=\sign(r-1),
\end{equation}
so the ratio-based direction criterion is precisely the sign of the local term.
The decomposition shows exactly what this criterion misses.
Changing the sampled probability necessarily changes the rest of the distribution
through the softmax normalizer, creating the global term $\E_{i\sim\tpi}[\bmu_i-\tpi_i]$ that no
single sampled ratio can reveal.
The ratio-based direction criterion is therefore equivalent to using only the
local term and ignoring this coupling.
The divergence-based criterion keeps both terms, which is why its sign can track
the true change in $\Dtok_t$ where the ratio sign cannot.

\begin{remark}[Step-size independence]
\label{rem:step-size}
The actual logit step scales the unit direction $\bm v$ by $\eta\,\Adv\,r$,
where $\eta>0$ is the effective step size and $r>0$ is the importance ratio.
Thus the first-order change of the divergence has sign
$\sign(\Adv\cdot\Taylor)$.
The mask only needs this sign.
It does not need to estimate $\eta$ or the magnitude of $r$.
By contrast, predicting the post-update value $\Dtok_t(\tpi_\eta)$ would require
those quantities and the higher-order $O(\eta^2)$ terms.
The sign prediction is reliable when the first-order term dominates the
higher-order remainder.
\end{remark}

\subsection{Estimating the directional derivative under a truncated vocabulary}
\label{sec:method-estimators}

Computing $\Taylor$ exactly would require the full vocabulary distribution, whereas a
production rollout stores only the top-$K$ probabilities and the leftover tail mass.
The shortfall is minor, because the sampled token and the retained top-$K$ entries are
observed directly and the only missing ingredient is the contribution of the unseen
tail. Estimating $\Taylor$ therefore reduces to modeling that tail, and we
consider two estimators that differ only in how they do so.

\noindent\textbf{Aggregated-tail estimator.}
The simplest choice mirrors the bucketing already used by the top-$K$ divergence in
Eq.~(\ref{eq:topk-kl}) and collapses the entire tail into one aggregated token of mass
$\tpi_{\mathrm{tail}}$ (resp.\ $\bmu_{\mathrm{tail}}$). The coefficient
becomes
\begin{equation}
\label{eq:agg-tail}
\Taylor^{\mathrm{agg}}=(\tpi_k-\bmu_k)
+\sum_{i\in\Keep}\tpi_i(\bmu_i-\tpi_i)
+\tpi_{\mathrm{tail}}\big(\bmu_{\mathrm{tail}}-\tpi_{\mathrm{tail}}\big).
\end{equation}
This single-bucket approximation can overestimate the tail contribution by
concentrating all residual mass on one aggregate token.

\noindent\textbf{Uniform-tail estimator.}
A more faithful model spreads the residual mass evenly over the $n-m$ unseen
tokens, taking $\tpi_j\approx\tpi_{\mathrm{tail}}/(n-m)$ and
$\bmu_j\approx\bmu_{\mathrm{tail}}/(n-m)$
for $j\notin\Keep$ (with $m=|\Keep|$ and $n$ the vocabulary size). Substituting into
the tail sums gives
\begin{equation}
\label{eq:uni-tail}
\Taylor^{\mathrm{uni}}=(\tpi_k-\bmu_k)
+\sum_{i\in\Keep}\tpi_i(\bmu_i-\tpi_i)
+\frac{\tpi_{\mathrm{tail}}\big(\bmu_{\mathrm{tail}}-\tpi_{\mathrm{tail}}\big)}{n-m},
\end{equation}
the same tail term as Eq.~(\ref{eq:agg-tail}) but suppressed by $1/(n-m)$. Since the
vocabulary is large this term is negligible, in contrast to the exaggerated estimate of
the single-bucket model.

\noindent\textbf{The gap between the two estimators.}
Since the two estimators share all retained-token terms and differ only in their
tail contribution, their difference is
\begin{equation}
\label{eq:tail-gap}
\Taylor^{\mathrm{agg}}-\Taylor^{\mathrm{uni}}
=\Big(1-\tfrac{1}{n-m}\Big)\,\tpi_{\mathrm{tail}}\,(\bmu_{\mathrm{tail}}-\tpi_{\mathrm{tail}})
\;\approx\; \tpi_{\mathrm{tail}}\,(\bmu_{\mathrm{tail}}-\tpi_{\mathrm{tail}}),
\end{equation}
where the approximation uses $n-m\gg1$.
In LLMs, the vocabulary size $n$ is typically on the order of $10^5$, while
$m=|\Keep|$ is typically at most $20$.
Thus the gap is small whenever the retained top-$K$ tokens capture most of the
probability mass.
In this regime, the aggregated-tail and uniform-tail masks should behave nearly
identically.

Both estimators are computationally lightweight. Each reads only the top-$K$
probabilities and tail masses already returned by the rollout, requiring no
additional forward or backward pass and no finite-difference evaluation.
Appendix~\ref{app:binary-kl-equivalence} shows that, under DPPO's binary KL
approximation, the aggregated-tail divergence-based direction criterion reduces exactly to the
ratio-based direction criterion.
Appendix~\ref{app:topk-tv-predictive} gives the analogous divergence-based direction criterion
when the proximity criterion is top-$K$ TV.

\subsection{Policy optimization with the predictive divergence mask}
\label{sec:method-mask}

Replacing the ratio-based direction criterion in Eq.~(\ref{eq:dppo}) with the
divergence-based direction criterion built from the directional derivative $\Taylor_t$ gives
the \emph{predictive divergence mask}:
\begin{equation}
\label{eq:predictive-mask}
M_t^{\mathrm{pred}} =
\begin{cases}
0, & \sign\big(\Adv_t\cdot\Taylor_t\big)>0 \;\text{ and }\; \Dtok_t>\delta,\\[2pt]
1, & \text{otherwise,}
\end{cases}
\end{equation}
where $\Dtok_t=\KLtopk(\bmu\,\|\,\tpi)$ and $\Taylor_t$ is the directional derivative of
Eq.~(\ref{eq:taylor-coeff}) (estimated by either the aggregated-tail or uniform-tail form). The mask reuses only
quantities already on hand during training, namely $\Adv_t$, the top-$K$ probabilities,
and the divergence, and introduces no new hyperparameter beyond the existing
threshold $\delta$. The resulting objective is the masked policy gradient
\begin{equation}
\label{eq:predictive-obj}
\nabla_\theta L^{\mathrm{pred}}(\tpi)
= \E_{y\sim\bmu}\!\left[\sum_{t=1}^{|y|}
M_t^{\mathrm{pred}}\; r_t\,\Adv_t\;\nabla_\theta\log\tpi(y_t\mid s_t)\right],
\end{equation}
which is identical to the DPPO-TopK-KL update of Eq.~(\ref{eq:dppo}) except that its direction criterion follows the divergence's first-order change instead of the sampled ratio.

\begin{remark}[Local nature of the prediction]
The predictive mask is a token-level first-order approximation.
It estimates whether one token's gradient contribution would locally increase or
decrease that token's divergence.
The actual parameter update aggregates gradients from all tokens in the batch, so
the realized divergence change at one token can also be affected by other tokens'
updates.
Thus the predicted sign should be viewed as a local estimate of the divergence
trend, not an exact prediction of the post-update divergence.
\end{remark}

\noindent\textbf{Agreement and disagreement of the two criteria.}
The decomposition in Eq.~(\ref{eq:decomp}) can also be read as centering the
sampled-token gap by the policy-weighted average gap:
\[
\Taylor
=
(\tpi_k-\bmu_k)-\E_{i\sim\tpi}[\tpi_i-\bmu_i].
\]
Thus the global term changes the mask decision only if this centering crosses
zero.
Equivalently, disagreement occurs only when the current-policy average gap lies
farther from zero in the same direction as the sampled-token gap:
\begin{equation}
\label{eq:recovery}
0<\tpi_k-\bmu_k<\E_{i\sim\tpi}[\tpi_i-\bmu_i]
\quad\text{or}\quad
0>\tpi_k-\bmu_k>\E_{i\sim\tpi}[\tpi_i-\bmu_i].
\end{equation}
Together with the proximity condition $\Dtok_t>\delta$, these are precisely the
tokens on which the predictive mask differs from the ratio-based mask.
They are also the cases where a single-sample ratio is least informative: the
sampled-token gap points in one direction, but the distribution-wide correction
reverses the predicted change in divergence.
Otherwise, the centered gap keeps the local sign, and the two masks agree.

\section{Experiments and Results}
\label{sec:experiments}

\begin{figure}[t]
    \centering
    \includegraphics[width=\linewidth]{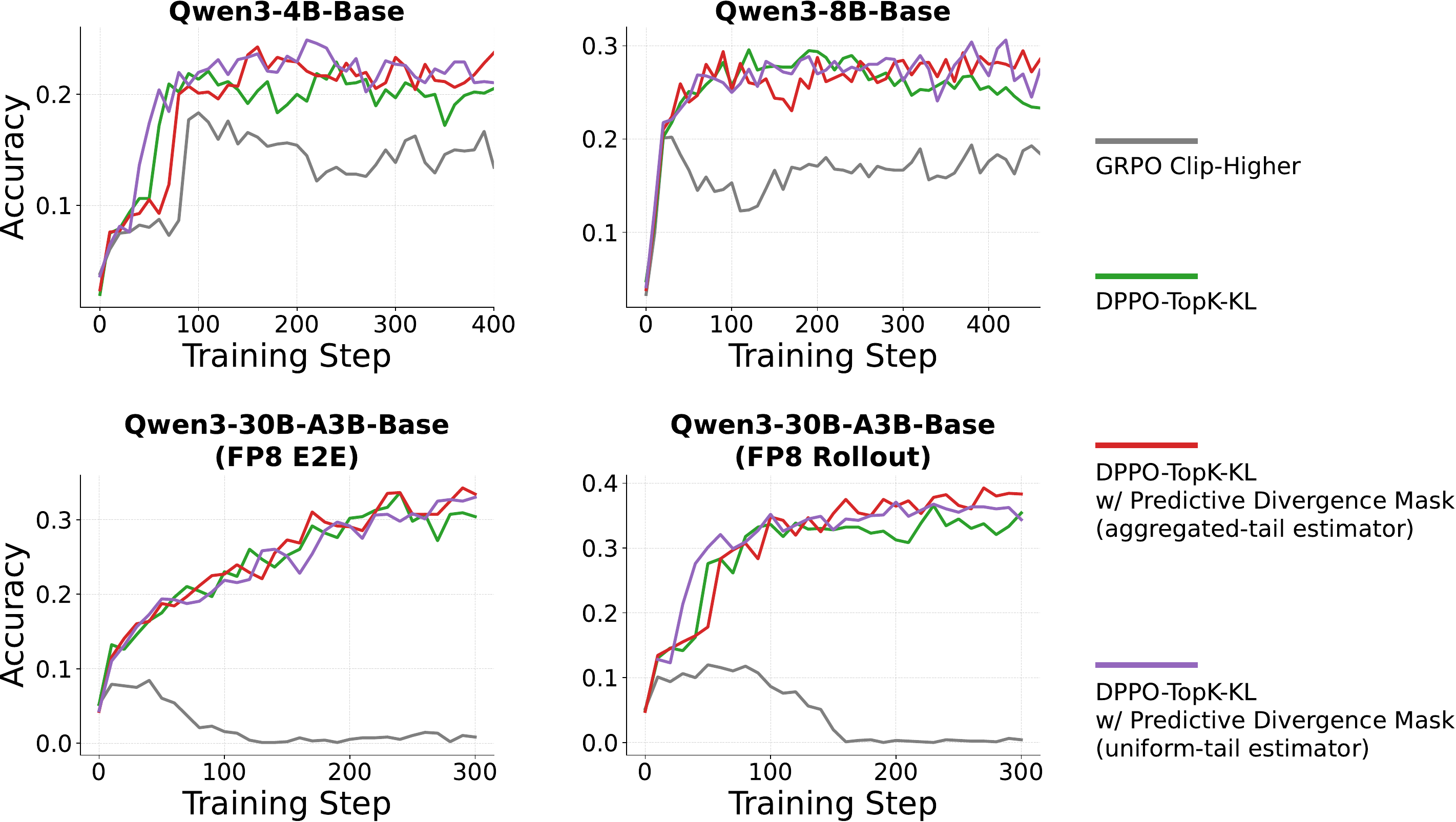}
    \caption{Evaluation accuracy (avg@16, averaged over AIME24 and AIME25) over
    training at $\delta=0.15$ on the four settings: Qwen3-4B-Base, Qwen3-8B-Base, Qwen3-30B-A3B-Base with
    FP8 E2E training, and Qwen3-30B-A3B-Base with FP8 Rollout. The two predictive divergence masks use the aggregated-tail and uniform-tail
estimators for the top-$K$ KL directional-derivative coefficient.}
    \label{fig:main_015}
\end{figure}

\begin{figure}[t]
    \centering
    \includegraphics[width=\linewidth]{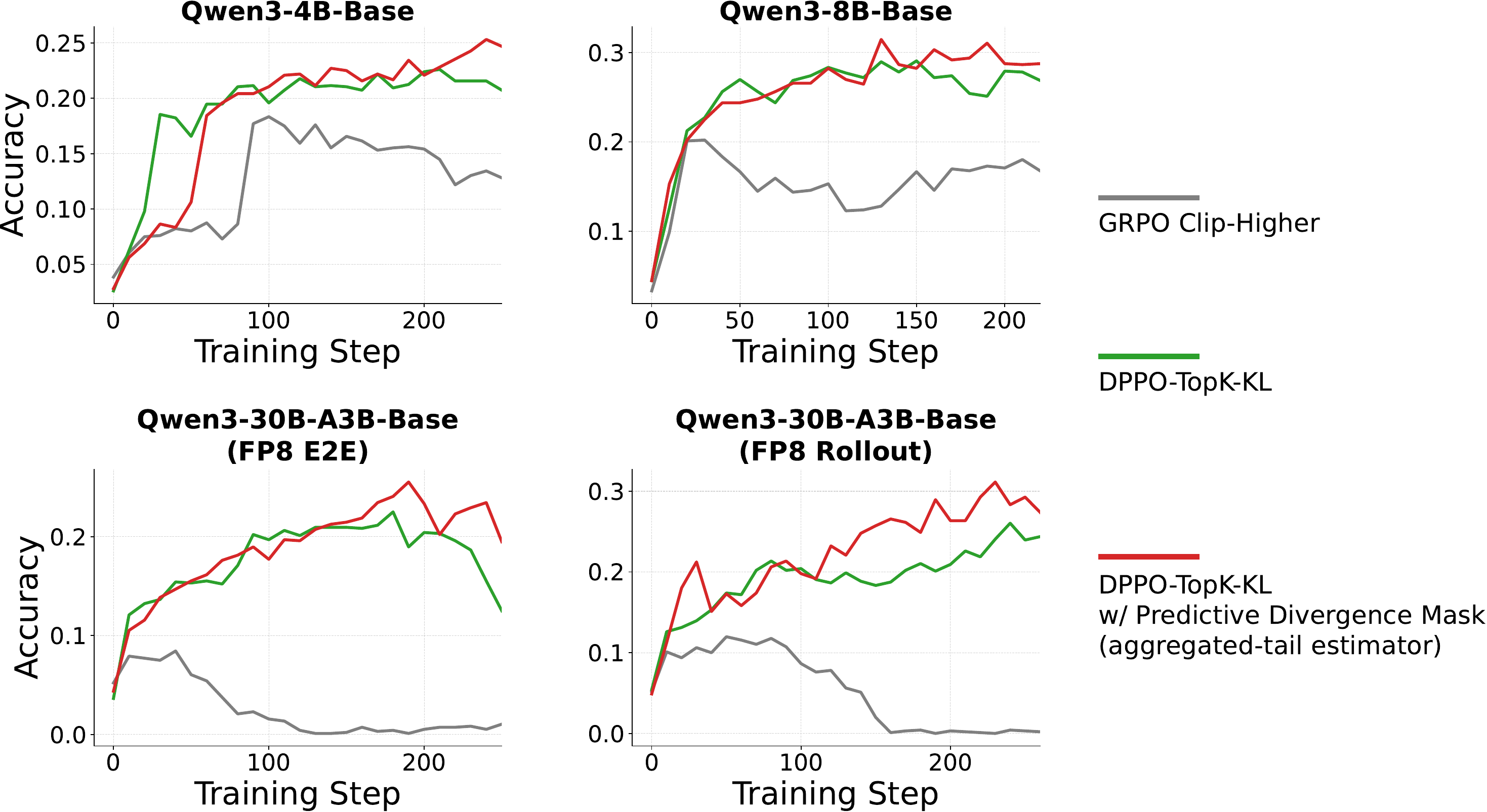}
    \caption{Evaluation accuracy (avg@16, averaged over AIME24 and AIME25) across the four settings with divergence threshold $\delta=0.05$.}
    \label{fig:main_005}
\end{figure}

\subsection{Main experiment}
\label{sec:exp-setup}

\noindent\textbf{Models, Data, and Benchmarks.} We use a filtered version of DAPO-Math-17k\footnote{\url{https://huggingface.co/datasets/Jiawei415/DPAO_filter}}~\citep{yu2025dapo} as the training dataset, which contains about 13k training samples. In evaluation, we use AIME24 and AIME25 as the benchmarks; for each problem we sample 16 responses and report the average accuracy (avg@16). For the base models, we choose Qwen3-4B-Base, Qwen3-8B-Base, and Qwen3-30B-A3B-Base~\citep{yang2025qwen3} for our experiments. For the Qwen3-30B-A3B-Base model, we use FP8 precision for rollout only (FP8 Rollout) or for both training and rollout (FP8 E2E) to accelerate the training process. FP8 also widens the numerical gap between the rollout and training engines, so these two configurations serve as a stress test. We build on the VeRL framework~\citep{sheng2024hybridflow} for RL training. Experiments are conducted on four nodes, each with 8 NVIDIA H20 GPUs (32 GPUs total).

\noindent\textbf{Methods and Baselines.}
Our primary baseline is DPPO-TopK-KL~\citep{dppo}, which pairs the top-$K$ KL proximity criterion with the original ratio-based direction criterion (Eq.~(\ref{eq:dppo})).
Against it we evaluate two predictive divergence masks, which set the direction from the aggregated-tail estimator (Eq.~(\ref{eq:agg-tail})) and the uniform-tail estimator (Eq.~(\ref{eq:uni-tail})).
We also include GRPO with the clip-higher trick ($\eps_{\mathrm{low}}=0.2$,
$\eps_{\mathrm{high}}=0.28$)~\citep{yu2025dapo} as a ratio-clipping reference. 
All divergence-based methods use $K=20$.
The main comparison uses the recommended $\delta=0.15$, and a threshold-sensitivity study uses $\delta=0.05$. 
Remaining hyperparameters and hardware details are given in Appendix~\ref{app:experiments} and Table~\ref{tab:hyperparameters}.

\noindent\textbf{Main results.}
Figure~\ref{fig:main_015} reports the main comparison at the recommended
threshold $\delta=0.15$.
Across all four model--precision settings, GRPO clip-higher is unstable and
collapses in the two Qwen3-30B-A3B-Base settings, while the divergence-based
methods remain stable.
Within the divergence-based family, the two predictive divergence masks improve over DPPO-TopK-KL.
This comparison isolates the effect of the direction criterion, since DPPO-TopK-KL uses the same top-$K$ KL proximity criterion and differs only in whether the direction is read from the sampled ratio or predicted from the divergence change.
The improvement is therefore consistent with the central hypothesis of the paper:
when the trust region is defined by a divergence, the direction criterion should track the divergence itself rather than the sampled-token ratio.

The aggregated-tail and uniform-tail masks behave very similarly during
training, as expected from the small tail correction of Eq.~(\ref{eq:tail-gap}).
This suggests that the predictive mask is not sensitive to the particular tail
model used to estimate the directional-derivative coefficient.
At the tighter threshold $\delta=0.05$, all methods perform worse, indicating
that this trust region is overly restrictive (Figure~\ref{fig:main_005}).
Even under this tight threshold, however, the predictive divergence masks still
improve over DPPO-TopK-KL, suggesting better robustness to the trust-region
hyperparameter.

\subsection{Analysis of direction criteria}
\label{sec:direction_analysis}

\noindent\textbf{Token-level setup.}
We next examine the direction criterion itself at the token level.
The predictive and ratio-based masks share the same top-$K$ KL proximity
criterion, so they can differ only after a token has already left the trust region.
We therefore restrict the analysis to tokens with
$D=\KLtopk(\bmu\,\|\,\tpi)>\delta$.
For such a token, the ratio-based direction criterion keeps the update when
$\Adv\,(r-1)\le 0$.
The divergence-based direction criterion keeps it when
$\Adv\cdot\Taylor\le 0$, where $\Taylor$ is the estimated first-order
change of the top-$K$ divergence.

To evaluate these decisions, we compare them against the realized divergence
change after one actual update on the same batch,
$\Delta D=D_{\text{post}}-D_{\text{pre}}$.
If $\Delta D>0$, the kept update pushed the token further outside the trust region and should have been masked.
If $\Delta D<0$, the divergence contracted and the kept update was corrective.
We run $61$ independent seeds.
Each seed uses a fresh rollout of $1024$ sequences and applies each mask to the same batch.
Unless otherwise stated, error rates are computed per seed and reported as mean $\pm$ standard error.

\begin{figure}[t]
    \centering
    \includegraphics[width=0.9\linewidth]{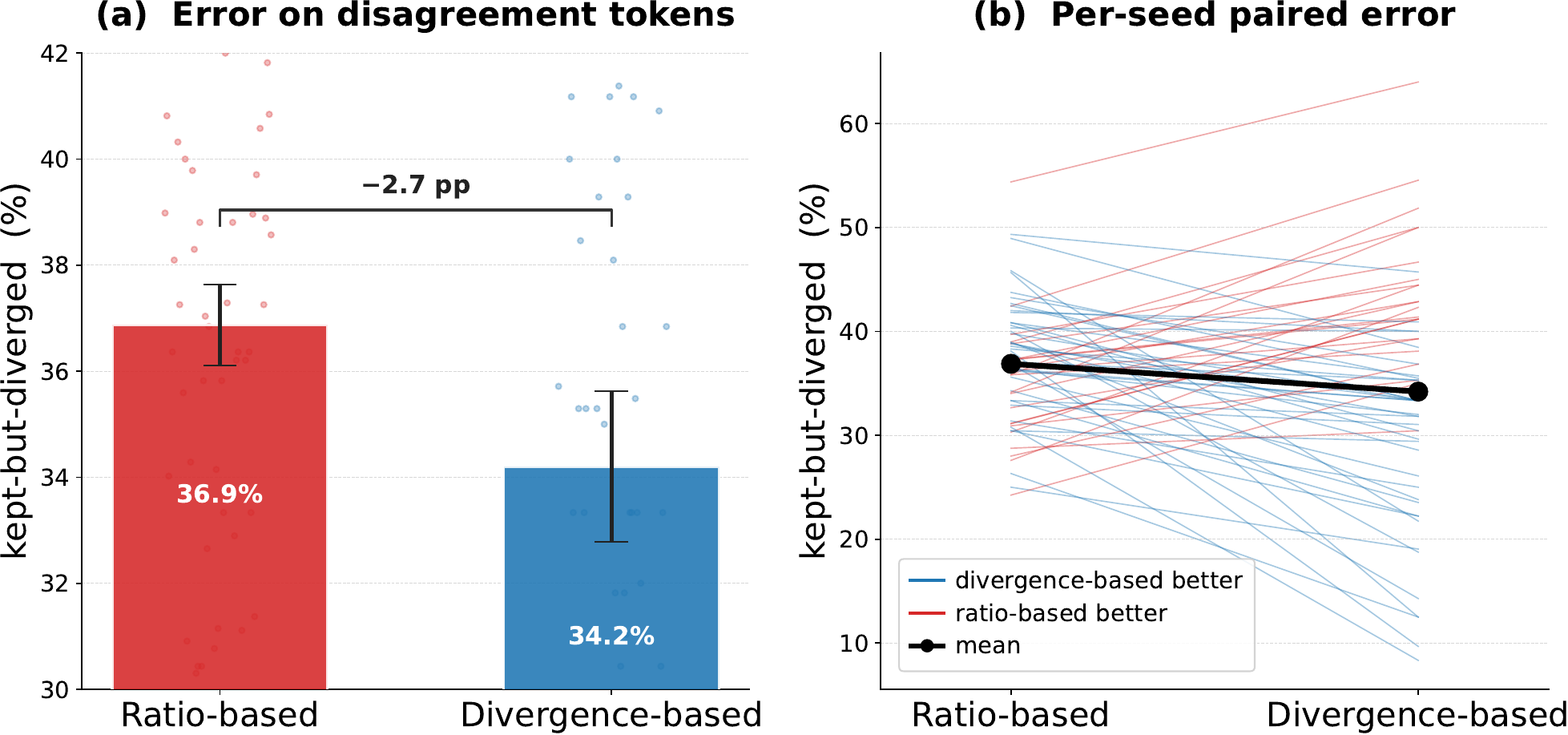}
    \caption{\textbf{Unsafe-keep rate on disagreement tokens}
    ($\delta=0.15$, $61$ seeds). We consider tokens outside the trust region
    ($D>\delta$) on which the ratio-based and divergence-based direction criteria
    make opposite keep/clip decisions. For each criterion, among the disagreement
    tokens it keeps, we report the fraction whose divergence increases after the
    update ($\Delta D>0$). Lower is better. \textbf{(a)} Mean unsafe-keep rate
    across seeds. The divergence-based direction criterion is lower than the
    ratio-based criterion ($34.2\%$ vs.\ $36.9\%$). Error bars show standard
    errors. \textbf{(b)} Per-seed paired comparison, with each line corresponding
    to one seed and colored by which criterion has the lower unsafe-keep rate.}
    \label{fig:disagreement_error}
\end{figure}

\begin{figure}[t]
    \centering
    \includegraphics[width=0.9\linewidth]{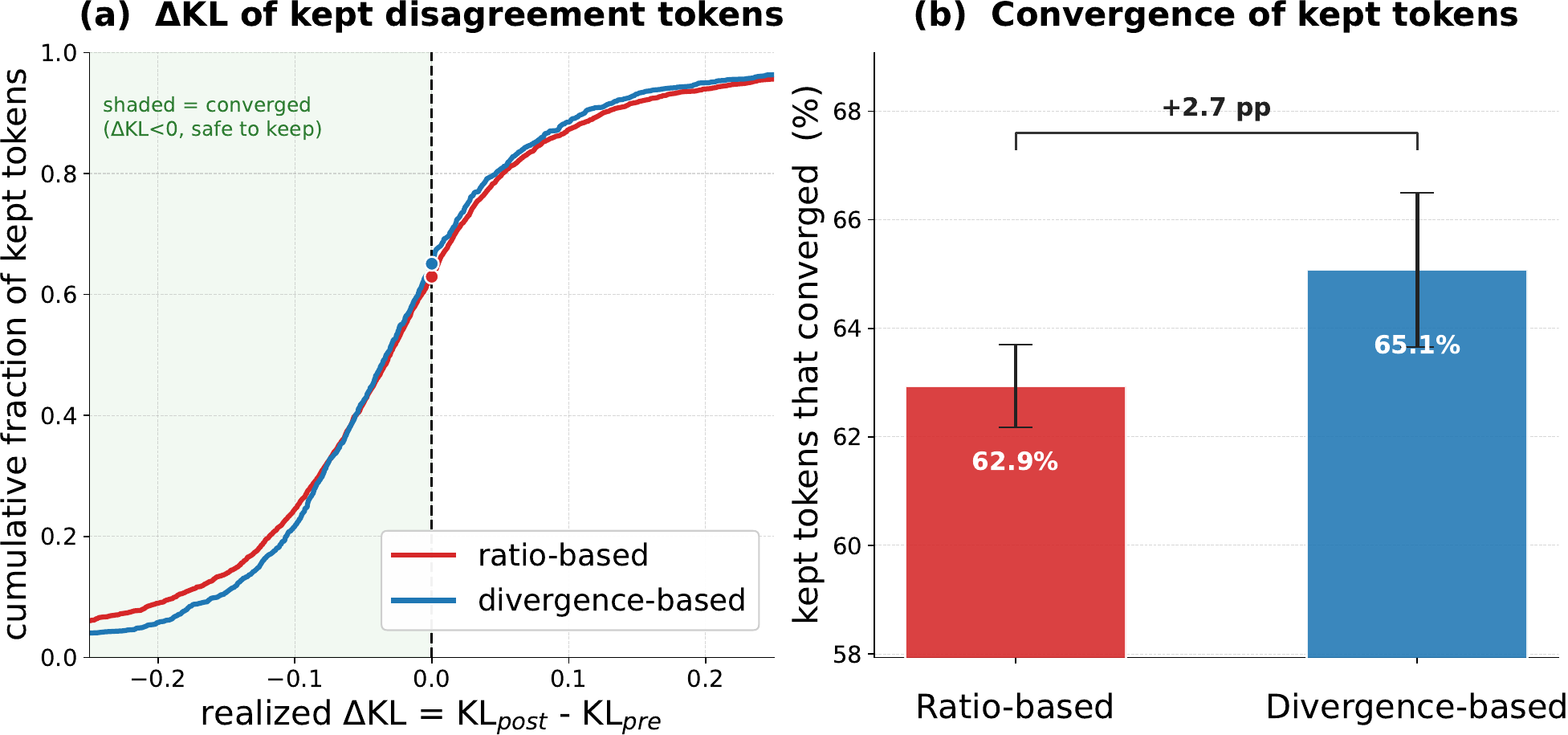}
    \caption{\textbf{Realized divergence change of kept disagreement tokens}
    ($\delta=0.15$, $61$ seeds). For each direction criterion, we collect the
    disagreement tokens it keeps and measure
    $\Delta D=D_{\text{post}}-D_{\text{pre}}$ after one actual update. Tokens from all
    seeds are combined in this figure. \textbf{(a)} Empirical CDF of $\Delta D$.
    The shaded region $\Delta D<0$ indicates kept updates that contracted the
    divergence. \textbf{(b)} Fraction of kept updates with $\Delta D<0$. The
    divergence-based direction criterion keeps a larger fraction of contracting
    updates than the ratio-based criterion ($65.1\%$ vs.\ $62.9\%$).}
    \label{fig:kept_dkl}
\end{figure}

\noindent\textbf{Results.}
On average, each seed contains $1435$ tokens outside the trust region.
The two direction criteria agree on the keep/clip decision for most of them and
disagree on only about $82$ tokens per seed.
Thus the two masks are distinguishable only on a small disagreement set, which is
where Figures~\ref{fig:disagreement_error}--\ref{fig:kept_dkl} focus.

Figure~\ref{fig:disagreement_error} measures unsafe keeps.
Among disagreement tokens kept by each criterion, the unsafe-keep rate is the
fraction whose divergence later increases.
Averaged over seeds, this rate is lower for the divergence-based direction
criterion than for the ratio-based direction criterion ($34.2\%$ vs.\ $36.9\%$),
a $2.7$ percentage-point reduction.
The divergence-based criterion has the lower unsafe-keep rate on $38$ of the
$61$ seeds.

Figure~\ref{fig:kept_dkl} shows the same comparison from the corrective side.
After combining tokens from all seeds, the fraction of kept updates that contract
the divergence is higher for the divergence-based direction criterion
($65.1\%$ vs.\ $62.9\%$).
This statistic is not the exact complement of the per-seed unsafe-keep rate in
Figure~\ref{fig:disagreement_error}, because it combines tokens across seeds before
computing the fraction.
The two figures therefore provide complementary views of the same directional
effect rather than two independent tests.

Overall, the token-level evidence supports the mechanism behind the predictive
mask.
The divergence-based direction criterion is better aligned with the realized
change of the divergence than the sampled-token ratio, although the effect is
modest because the disagreement set is small.

\section{Closing Remarks}

Divergence-based trust-region masks (e.g., DPPO) improve PPO's proximity criterion, but they
still inherit PPO's sampled-ratio direction criterion.
We address this mismatch with the predictive divergence mask, which decides
whether to mask an update from the predicted first-order change of the divergence
itself.
For softmax policies, this change decomposes into a local term that recovers the
ratio-based criterion and a global term that captures distribution-wide softmax
coupling.

Empirically, the divergence-based direction criterion is better aligned with the
realized divergence change, and the resulting mask improves
training stability and effectiveness across model scales and precision settings.
The method is still a local first-order approximation, since real parameter
updates aggregate gradients from all tokens and can introduce token interactions.
Nevertheless, the results suggest that the direction criterion should receive the
same distributional treatment as the proximity criterion.

\bibliography{ref}

\clearpage
\appendix
\section{Related Work}
\label{sec:related}

\noindent\textbf{Trust-region policy optimization.}
Constraining each update to a neighborhood of the data-generating policy is the
central idea of TRPO~\citep{trpo} and CPO~\citep{cpo}, which enforce an explicit
KL constraint, and of PPO~\citep{ppo}, which approximates it with the cheap
clipped surrogate of Eq.~(\ref{eq:ppo}). Mirror-descent and proximal
formulations~\citep{tomar2022mirror,wang2020truly,wang2019trust} make the
constraint explicit again, and SPO~\citep{spo} replaces the hard clip with a
smooth quadratic penalty whose stationary point lies on the trust-region
boundary. Our work stays within this family but separates a clipping rule into a
proximity criterion and a direction criterion. This separation shows that
existing trust-region masks can improve the proximity criterion while still using
PPO's sampled-ratio direction criterion.

\noindent\textbf{RL for LLM reasoning and off-policy correction.}
PPO-style optimization drives RLHF~\citep{ouyang2022training} and modern reasoning
training~\citep{guo2025deepseekr1,team2025kimi,zhou2025reinforcing,zhou2025variational}. GRPO~\citep{grpo} removes the
critic with group-relative advantages, and recent
follow-ups~\citep{ahmadian2024back,drgrpo,yu2025dapo,cispo} refine the advantage
estimator, the clipping range, or the importance-sampling correction.
DAPO~\citep{yu2025dapo} decouples the upper and lower clip bounds, while
CISPO~\citep{cispo} truncates importance weights instead of masking. These
methods modify the objective or the clipping rule, but they still decide the
update direction from the sampled importance ratio.
In deployed LLM RL systems, an additional source of instability is the
training-inference mismatch. The rollout policy served by an optimized inference
engine can differ from the training policy, which induces an off-policy gap and destabilizes
training~\citep{qi2025defeating,yao2025offpolicy,zheng2025stabilizing}.
Prior work mitigates this gap with truncated or masked importance
sampling~\citep{yao2025offpolicy,zheng2025stabilizing,liu-li-2025,team2025every},
or with engineering-level alignment of numerical and implementation
details~\citep{Team2025EveryAM,he2025nondeterminism}.

\noindent\textbf{Divergence-based trust-region masks.}
Closest to us are methods that replace the sampled ratio deviation $|r-1|$ with a
distributional measure of policy drift. DPPO~\citep{dppo} reinterprets PPO's clip
as a one-sample estimate of total variation and replaces it with an explicit
divergence mask, including the top-$K$ KL realization of Eq.~(\ref{eq:topk-kl})
that we build on.
Complementary work studies non-uniform token-level trust regions that vary the
trust-region constraint across tokens~\citep{mao2026beyond}.
Other divergence-based methods~\citep{r2vpo,yao2026rethinking} use smooth
regularization and reweight updates rather than masking them. All of these
methods inherit PPO's ratio-based direction criterion, which decides whether an
update is outward-pointing from the sign of $\Adv(r-1)$. Our contribution is to
make the direction criterion consistent with the divergence it serves.
We derive the first-order change of the divergence in closed form for softmax
policies and provide truncation-aware estimators of the resulting coefficient.

\clearpage
\section{Derivation of the Directional Derivative}
\label{app:derivations}

This appendix proves Proposition~\ref{prop:dir-deriv}.
The derivation is written for a generic softmax support.
It applies to the full vocabulary, and also to the reduced support used by the
top-$K$ construction when the tail is represented as an aggregate bucket.

\subsection{Softmax sensitivity}

We first record the first-order sensitivity of a softmax policy to a logit
perturbation, which underlies the entire derivation.

\begin{lemma}[Softmax logit sensitivity]
\label{lem:softmax-sensitivity}
Let $\tpi_i=e^{z_i}/\sum_j e^{z_j}$ be a softmax distribution with logits $z_i$, and let
$\tpi_{\eta}$ denote the distribution obtained by perturbing the logits along a fixed
direction $\bm v=(v_i)$, that is, $z_i(\eta)=z_i+\eta\,v_i$. Then
\begin{equation}
\label{eq:app-logderiv}
\frac{d}{d\eta}\log\tpi_{\eta,i}\Big|_{\eta=0}
= v_i - \sum_j \tpi_j v_j.
\end{equation}
\end{lemma}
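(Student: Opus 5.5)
The plan is to differentiate the log-softmax directly. Since $\log\tpi_{\eta,i}=z_i+\eta v_i-\log\sum_j e^{z_j+\eta v_j}$, the claim reduces to computing the derivative of the log-partition function $\log Z(\eta)$, where $Z(\eta)=\sum_j e^{z_j+\eta v_j}$. The first term contributes exactly $v_i$, so everything hinges on showing $\frac{d}{d\eta}\log Z(\eta)\big|_{\eta=0}=\sum_j\tpi_j v_j$.

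For that step, the chain rule gives $\frac{d}{d\eta}\log Z(\eta)=Z'(\eta)/Z(\eta)$. Differentiating the finite sum term by term gives $Z'(\eta)=\sum_j v_j e^{z_j+\eta v_j}$. Dividing by $Z(\eta)$ then yields $\sum_j v_j\,\tpi_{\eta,j}$, since $e^{z_j+\eta v_j}/Z(\eta)$ is exactly the perturbed softmax probability. At $\eta=0$ this is $\sum_j\tpi_j v_j$, and subtracting it from $v_i$ gives Eq.~(\ref{eq:app-logderiv}).

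There is no real obstacle. The support is finite (either the full vocabulary or the reduced top-$K$ support with an aggregated tail bucket), so term-by-term differentiation is justified, and $Z>0$ keeps the logarithm smooth. I will also note two consequences for later use. First, the derivative is the centered direction $v_i-\E_{j\sim\tpi}[v_j]$, so it is invariant to adding a constant to $\bm v$. Second, multiplying by $\tpi_i$ gives $\frac{d}{d\eta}\tpi_{\eta,i}=\tpi_i\,(v_i-\sum_j\tpi_j v_j)$.

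The second form is what Proposition~\ref{prop:dir-deriv} needs. With $\bm v_i=\1[i=k]-\tpi_i$, one has $\sum_j\tpi_j v_j=\tpi_k-\sum_j\tpi_j^2$. Since $\bmu$ is fixed, $\frac{d}{d\eta}\KL(\bmu\,\|\,\tpi_\eta)=-\sum_i\bmu_i\frac{d}{d\eta}\log\tpi_{\eta,i}$. Substituting gives
\[
-\sum_i\bmu_i\bigl(\1[i=k]-\tpi_i\bigr)+\tpi_k-\sum_j\tpi_j^2 .
\]
This equals $(\tpi_k-\bmu_k)+\sum_i\tpi_i(\bmu_i-\tpi_i)$, where the cancellation uses $\sum_i\bmu_i=1$.
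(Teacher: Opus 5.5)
Your proof is correct and is the same as the paper's. Both write $\log\tpi_{\eta,i}=z_i+\eta v_i-\log\sum_j e^{z_j+\eta v_j}$ and identify the derivative of the log-partition term at $\eta=0$ as $\sum_j\tpi_j v_j$. Your follow-on computation for Proposition~\ref{prop:dir-deriv} also matches the paper's Eq.~(\ref{eq:app-result}). One small point: $\sum_i\bmu_i=1$ is used when you pull the constant out of $-\sum_i\bmu_i(v_i-\sum_j\tpi_jv_j)$, not in the final rearrangement where you credit it.
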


\begin{proof}
By definition,
\begin{equation}
\log\tpi_{\eta,i}
= z_i+\eta v_i-\log\sum_j \exp(z_j+\eta v_j).
\end{equation}
Differentiating with respect to $\eta$ gives
\begin{equation}
\frac{d}{d\eta}\log\tpi_{\eta,i}
= v_i -
\frac{\sum_j v_j\exp(z_j+\eta v_j)}
{\sum_j \exp(z_j+\eta v_j)}.
\end{equation}
At $\eta=0$, the fraction is $\sum_j\tpi_j v_j$.
Thus,
\begin{equation}
\frac{d}{d\eta}\log\tpi_{\eta,i}\Big|_{\eta=0}
= v_i-\sum_j\tpi_j v_j .
\end{equation}
\end{proof}

\subsection{Directional derivative of the divergence}

\begin{proof}[Proof of Proposition~\ref{prop:dir-deriv}]
The proposition uses the unit logit direction induced by a policy-gradient step.
For the per-token surrogate $r\,\Adv$, the logit gradient is $\Adv\,r\,\nabla_z\log\tpi_k$, where
\begin{equation}
\nabla_{z_i}\log\tpi_k=\1[i=k]-\tpi_i .
\end{equation}
The scalar $r$ is positive, so the sign of the realized step is determined by
$\Adv$.
We therefore compute the derivative along the unit direction
\begin{equation}
\label{eq:app-v}
v_i = \1[i=k]-\tpi_i .
\end{equation}
Its average under the current policy is
\begin{equation}
\sum_j \tpi_j v_j
= \sum_j \tpi_j\big(\1[j=k]-\tpi_j\big)
= \tpi_k - \sum_j \tpi_j^2 .
\end{equation}

Now write the forward KL as
\begin{equation}
\KL(\bmu\|\tpi_\eta)
=
\sum_i \bmu_i\log\bmu_i
-
\sum_i \bmu_i\log\tpi_{\eta,i} .
\end{equation}
The first term is independent of $\eta$.
Applying Lemma~\ref{lem:softmax-sensitivity} to the second term gives
\begin{align}
\Taylor
&= -\sum_i \bmu_i\,
\frac{d}{d\eta}\log\tpi_{\eta,i}\Big|_{\eta=0}
\nonumber\\
&= -\sum_i \bmu_i\Big(v_i-\sum_j\tpi_j v_j\Big)
\nonumber\\
&= -\sum_i \bmu_i v_i
+ \Big(\sum_i\bmu_i\Big)\Big(\sum_j\tpi_j v_j\Big)
\nonumber\\
&= -\sum_i \bmu_i v_i + \sum_j\tpi_j v_j,
\label{eq:app-step}
\end{align}
where the last equality uses $\sum_i\bmu_i=1$.

The two terms in Eq.~(\ref{eq:app-step}) are
\begin{equation}
\sum_i \bmu_i v_i
= \sum_i \bmu_i\big(\1[i=k]-\tpi_i\big)
= \bmu_k-\sum_i\bmu_i\tpi_i,
\end{equation}
and
\begin{equation}
\sum_j\tpi_j v_j
= \tpi_k-\sum_j\tpi_j^2 .
\end{equation}
Substituting them into Eq.~(\ref{eq:app-step}) yields
\begin{align}
\Taylor
&= -\Big(\bmu_k-\sum_i\bmu_i\tpi_i\Big)
+ \Big(\tpi_k-\sum_j\tpi_j^2\Big)
\nonumber\\
&= (\tpi_k-\bmu_k)
+ \sum_i\tpi_i(\bmu_i-\tpi_i),
\label{eq:app-result}
\end{align}
which is Eq.~(\ref{eq:taylor-coeff}).
This completes the proof.
\end{proof}

\section{Equivalence under Binary-KL Approximation}
\label{app:binary-kl-equivalence}

DPPO also considers a binary KL approximation that collapses the vocabulary into
the sampled token and its complement. In this special case, the predictive
direction with the aggregated-tail estimator reduces exactly to the ratio-based
direction criterion.

Let the binary support contain the sampled token $k$ and a tail bucket. The two
policies are
\begin{equation}
\bmu^{\mathrm{bin}}=(\bmu_k, 1-\bmu_k),
\qquad
\tpi^{\mathrm{bin}}=(\tpi_k, 1-\tpi_k).
\end{equation}
The binary KL proximity criterion is
\begin{equation}
\KL\big(\bmu^{\mathrm{bin}}\|\tpi^{\mathrm{bin}}\big)
=
\bmu_k\log\frac{\bmu_k}{\tpi_k}
+
(1-\bmu_k)\log\frac{1-\bmu_k}{1-\tpi_k}.
\end{equation}
Applying Eq.~(\ref{eq:taylor-coeff}) to this two-atom support gives
\begin{align}
\Taylor^{\mathrm{bin}}
&= (\tpi_k-\bmu_k)
+ \tpi_k(\bmu_k-\tpi_k)
+ (1-\tpi_k)\big((1-\bmu_k)-(1-\tpi_k)\big)
\nonumber\\
&= (\tpi_k-\bmu_k)
- \tpi_k(\tpi_k-\bmu_k)
+ (1-\tpi_k)(\tpi_k-\bmu_k)
\nonumber\\
&= 2(1-\tpi_k)(\tpi_k-\bmu_k).
\label{eq:binary-kl-taylor}
\end{align}
For nondegenerate softmax probabilities, $0<\tpi_k<1$ and $\bmu_k>0$.
Therefore
\begin{equation}
\sign\big(\Taylor^{\mathrm{bin}}\big)
=
\sign(\tpi_k-\bmu_k)
=
\sign(r-1),
\end{equation}
where $r=\tpi_k/\bmu_k$.
It follows that
\begin{equation}
\sign\big(\Adv\,\Taylor^{\mathrm{bin}}\big)
=
\sign\big(\Adv\,(r-1)\big).
\end{equation}

Thus, if the proximity criterion is also the same binary KL, the predictive mask
with the aggregated-tail estimator is identical to DPPO-Binary-KL. This is a
degenerate case in which the binary approximation removes the distributional
degrees of freedom that the divergence-based direction criterion is designed to
use. The top-$K$ KL setting retains additional tokens, so the global correction
can differ from the sampled-token ratio direction.

\section{Predictive Divergence Masks under TV Divergence}
\label{app:topk-tv-predictive}

The paper focuses on top-$K$ forward KL because it matches the primary baseline
DPPO-TopK-KL.
DPPO can also use total variation as the proximity criterion.
This section derives the corresponding divergence-based direction criterion for TV using the same
first-order principle as in the KL case.
For a generic support, define
\begin{equation}
D_{\mathrm{TV}}(\bmu,\tpi)
=
\frac12\sum_i|\bmu_i-\tpi_i|.
\end{equation}
The divergence is not differentiable when a probability gap is exactly zero.
We use the usual subgradient convention $\sign(0)=0$.
On any fixed support, Lemma~\ref{lem:softmax-sensitivity} gives the
directional derivative of TV along the unit policy-gradient direction:
\begin{equation}
\label{eq:tv-generic-deriv}
\Taylor_{\mathrm{TV}}
=
-\frac12\sum_i s_i\,\tpi_i\Big(v_i-\sum_j\tpi_jv_j\Big),
\quad\text{where}\quad
s_i=\sign(\bmu_i-\tpi_i).
\end{equation}
The only difference from the KL case is the factor
$s_i=\sign(\bmu_i-\tpi_i)$, which comes from differentiating the absolute value
in TV.

\noindent\textbf{Aggregated-tail estimator.}
Under the top-$K$ aggregated-tail construction, the support contains the retained
tokens $\Keep$ and one tail bucket.
This treats the unseen tail as a single coordinate when computing both the TV
proximity and the softmax coupling term.
The top-$K$ TV proximity criterion is as follows:
\begin{equation}
D_{\mathrm{TV}}^{\mathrm{TopK}}(\bmu,\tpi)
=
\frac12\sum_{i\in\Keep}|\bmu_i-\tpi_i|
+
\frac12|\bmu_{\mathrm{tail}}-\tpi_{\mathrm{tail}}|.
\end{equation}
For the tail bucket, define
\begin{equation}
 s_{\mathrm{tail}}
 =\sign(\bmu_{\mathrm{tail}}-\tpi_{\mathrm{tail}}).
\end{equation}
On this reduced support,
\begin{equation}
 c_{\mathrm{agg}}
 =\sum_j\tpi_jv_j
 =\tpi_k-
 \sum_{i\in\Keep}\tpi_i^2
 -\tpi_{\mathrm{tail}}^2 .
\end{equation}
Substituting into Eq.~(\ref{eq:tv-generic-deriv}) gives
\begin{equation}
\label{eq:tv-agg-tail}
\Taylor_{\mathrm{TV}}^{\mathrm{agg}}
=
-\frac12
\left[
\sum_{i\in\Keep}
 s_i\tpi_i\big(\1[i=k]-\tpi_i-c_{\mathrm{agg}}\big)
+
 s_{\mathrm{tail}}\tpi_{\mathrm{tail}}
 \big(-\tpi_{\mathrm{tail}}-c_{\mathrm{agg}}\big)
\right].
\end{equation}

\noindent\textbf{Uniform-tail estimator.}
Under the top-$K$ uniform-tail construction, the residual mass is spread over the
$n-m$ unseen tokens, where $m=|\Keep|$ and $n$ is the vocabulary size.
This keeps the observed head terms unchanged but models the unseen tail as many
small coordinates rather than one aggregate coordinate:
\begin{equation}
\tpi_j\approx\frac{\tpi_{\mathrm{tail}}}{n-m},
\qquad
\bmu_j\approx\frac{\bmu_{\mathrm{tail}}}{n-m},
\qquad j\notin\Keep.
\end{equation}
The softmax coupling term becomes
\begin{equation}
 c_{\mathrm{uni}}
 =\tpi_k-
 \sum_{i\in\Keep}\tpi_i^2
 -\frac{\tpi_{\mathrm{tail}}^2}{n-m}.
\end{equation}
The tail contribution in Eq.~(\ref{eq:tv-generic-deriv}) then sums to
\begin{equation}
 s_{\mathrm{tail}}
 \left(
 -\frac{\tpi_{\mathrm{tail}}^2}{n-m}
 -c_{\mathrm{uni}}\tpi_{\mathrm{tail}}
 \right).
\end{equation}
Thus
\begin{equation}
\label{eq:tv-uni-tail}
\Taylor_{\mathrm{TV}}^{\mathrm{uni}}
=
-\frac12
\left[
\sum_{i\in\Keep}
 s_i\tpi_i\big(\1[i=k]-\tpi_i-c_{\mathrm{uni}}\big)
+
 s_{\mathrm{tail}}
 \left(
 -\frac{\tpi_{\mathrm{tail}}^2}{n-m}
 -c_{\mathrm{uni}}\tpi_{\mathrm{tail}}
 \right)
\right].
\end{equation}

The masking rule is unchanged.
Only the proximity measure and the divergence-based direction coefficient are replaced
by their TV counterparts.
With either estimator, the corresponding predictive TV mask is as follows:
\begin{equation}
M_t^{\mathrm{pred}} =
\begin{cases}
0, & \sign\big(\Adv_t\,\Taylor_{\mathrm{TV},t}\big)>0
\ \text{ and }\ D_{\mathrm{TV},t}^{\mathrm{TopK}}>\delta,\\[2pt]
1, & \text{otherwise.}
\end{cases}
\end{equation}
Since the paper focuses on DPPO-TopK-KL, this TV form is included as the
corresponding extension for a TV proximity criterion.

\section{Additional Experimental Details}
\label{app:experiments}

\subsection{Training Details}
\label{app:training-config}

\noindent\textbf{Implementation.}
We implement the training pipeline in VeRL~\citep{sheng2024hybridflow}, using
Megatron-LM~\citep{megatron-lm} for training and vLLM~\citep{kwon2023efficient}
for rollout generation. By default, both training and rollout use BF16.
For Qwen3-30B-A3B-Base, we evaluate two low-precision settings:
FP8 only in vLLM (FP8 Rollout) and FP8 in both Megatron-LM and vLLM (FP8 E2E).

\noindent\textbf{Hyperparameters.}
All methods use the same learning rate,
batching, rollout temperature, response budget, and group size. Each rollout
batch is used for one PPO epoch. The DPPO-TopK-KL and predictive runs use a
top-$K$ support of $K=20$; the main comparison uses $\delta=0.15$, and the
tight-threshold comparison uses $\delta=0.05$. Table~\ref{tab:hyperparameters}
summarizes the hyperparameters used in our experiments.

\begin{table}[t]
    \centering
    \small
    \caption{Training hyperparameters. Both Qwen3-30B-A3B-Base precision settings use
    the values in the final column.}
    \label{tab:hyperparameters}
    \begin{tabular}{@{}lccc@{}}
    \toprule
        \textbf{Hyperparameter} & \textbf{Qwen3-4B-Base} & \textbf{Qwen3-8B-Base} & \textbf{Qwen3-30B-A3B-Base} \\
    \midrule
        Learning rate & $1\!\times\!10^{-6}$ & $1\!\times\!10^{-6}$ & $1\!\times\!10^{-6}$ \\
        PPO epochs & 1 & 1 & 1 \\
        Maximum prompt length & 2,048 & 2,048 & 2,048 \\
        Maximum response length & 8,192 & 8,192 & 8,192 \\
        Training batch size & 64 & 128 & 128 \\
        PPO minibatch size & 16 & 16 & 16 \\
        Rollout temperature & 1.0 & 1.0 & 1.0 \\
        Group size & 8 & 16 & 16 \\
        Top-$K$ support size, $K$ & 20 & 20 & 20 \\
        DPPO threshold, $\delta$ & 0.15 / 0.05 & 0.15 / 0.05 & 0.15 / 0.05 \\
    \bottomrule
    \end{tabular}
\end{table}

\subsection{Detailed training dynamics}
\label{app:training-dynamics}

Figures~\ref{fig:4b_full}--\ref{fig:30ba3b_fp8rollout_full} show additional
training details, including training reward, separate AIME24 and AIME25
accuracy, response length, PPO-KL, and clip fraction.

DPPO-TopK-KL and the two predictive divergence masks occupy a similar PPO-KL range, while the predictive divergence masks generally exhibit lower clip fractions.

\begin{figure}[h]
    \centering
    \includegraphics[width=0.94\linewidth]{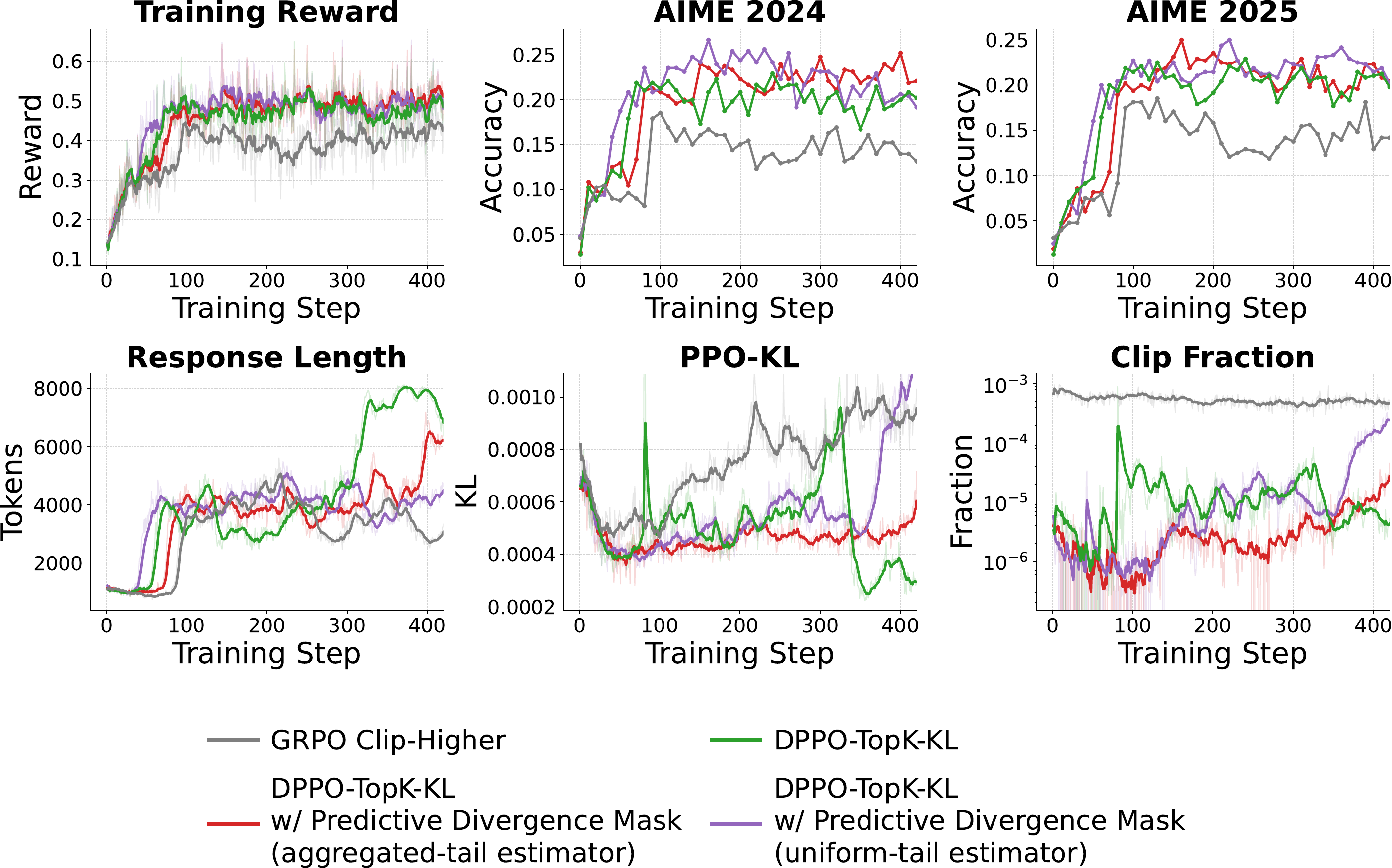}
    \caption{Training dynamics for Qwen3-4B-Base at $K=20$ and
    $\delta=0.15$. AIME24 and AIME25 are shown separately. The remaining
    panels report reward, response length, PPO-KL, and clip fraction.}
    \label{fig:4b_full}
\end{figure}

\begin{figure}[h]
    \centering
    \includegraphics[width=0.94\linewidth]{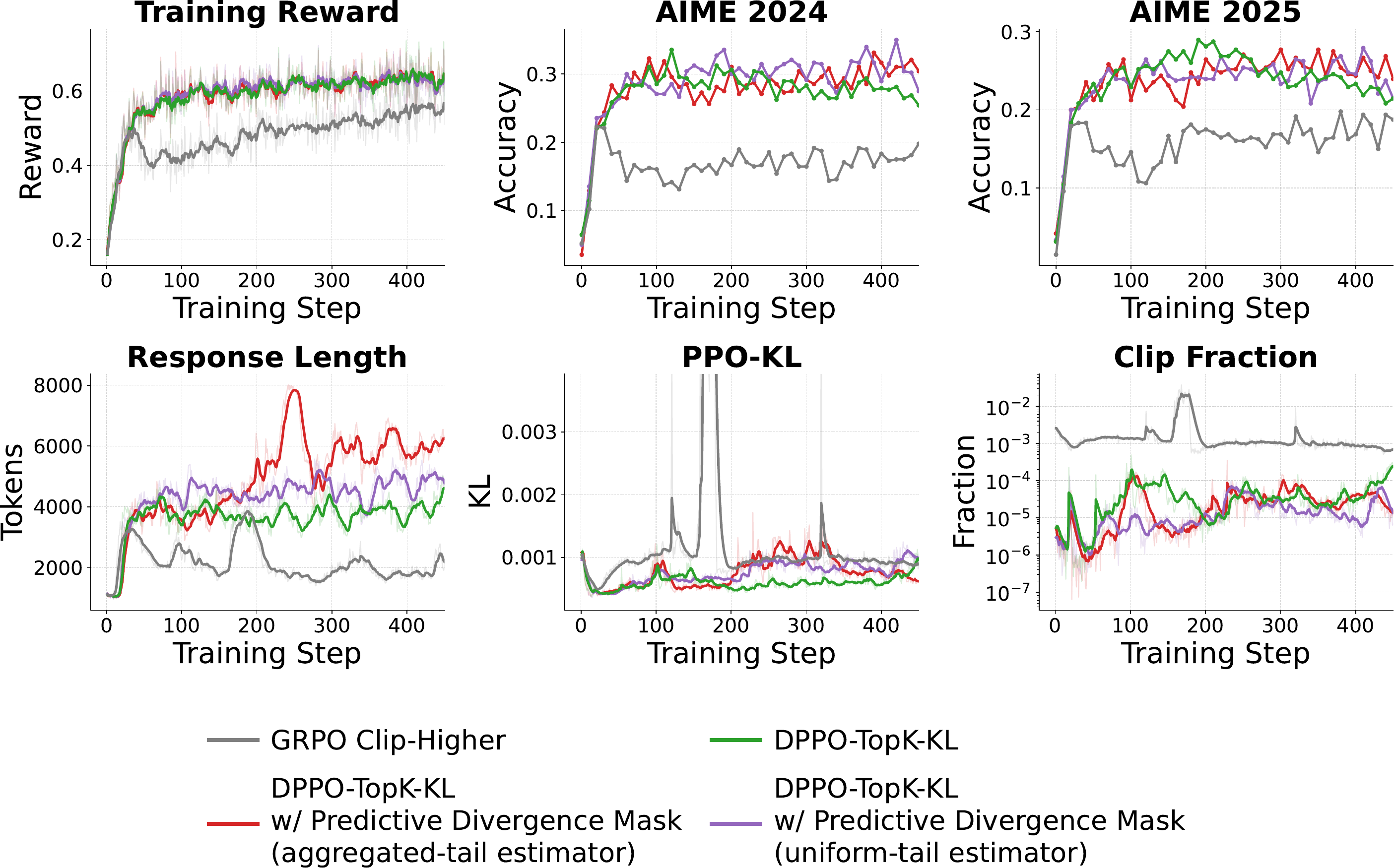}
    \caption{Training dynamics for Qwen3-8B-Base at $K=20$ and
    $\delta=0.15$, with the same metrics and methods as
    Figure~\ref{fig:4b_full}.}
    \label{fig:8b_full}
\end{figure}

\begin{figure}[h]
    \centering
    \includegraphics[width=0.94\linewidth]{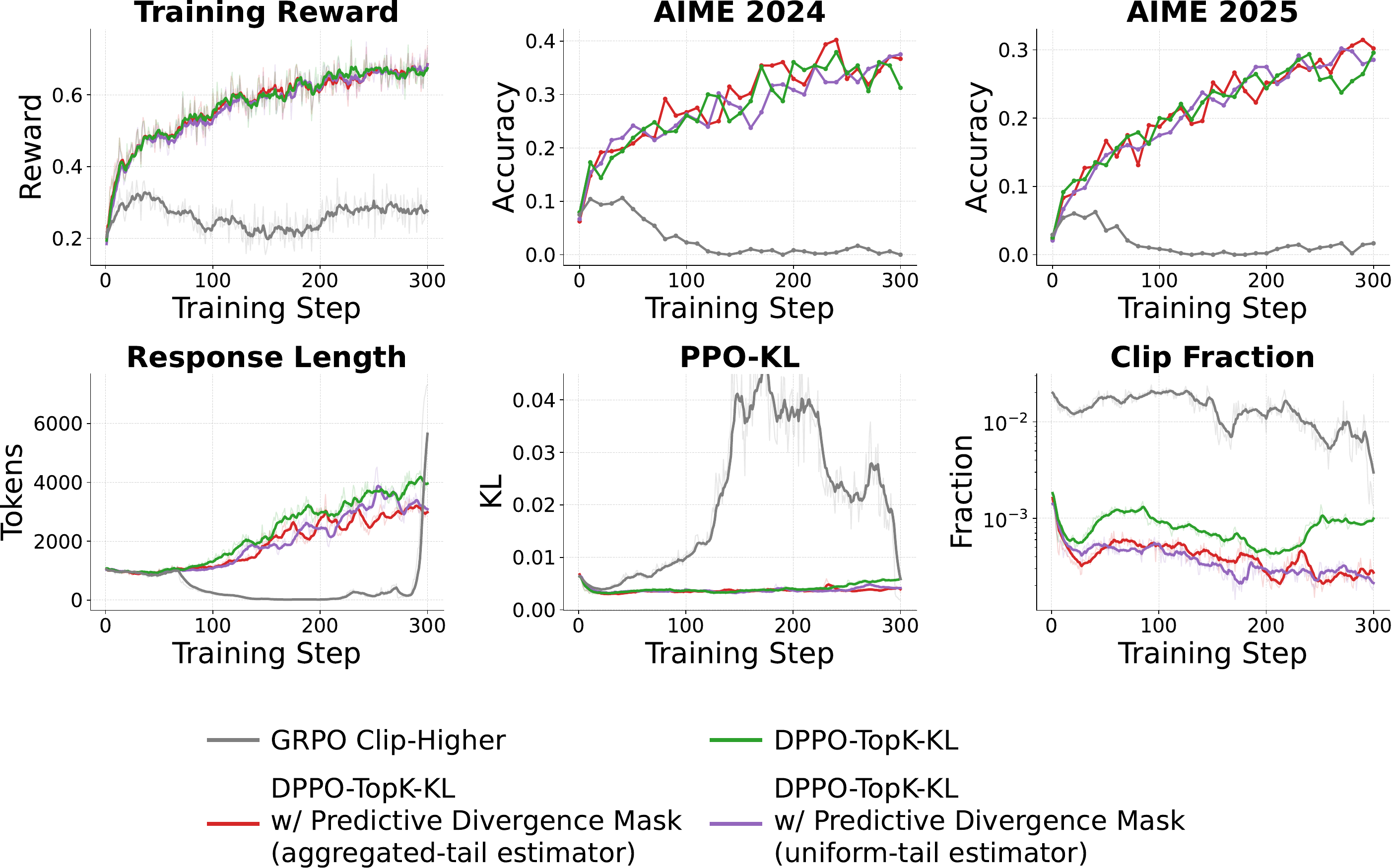}
    \caption{Training dynamics for Qwen3-30B-A3B-Base under FP8 E2E training at
    $K=20$ and $\delta=0.15$.}
    \label{fig:30ba3b_fp8e2e_full}
\end{figure}

\begin{figure}[h]
    \centering
    \includegraphics[width=0.94\linewidth]{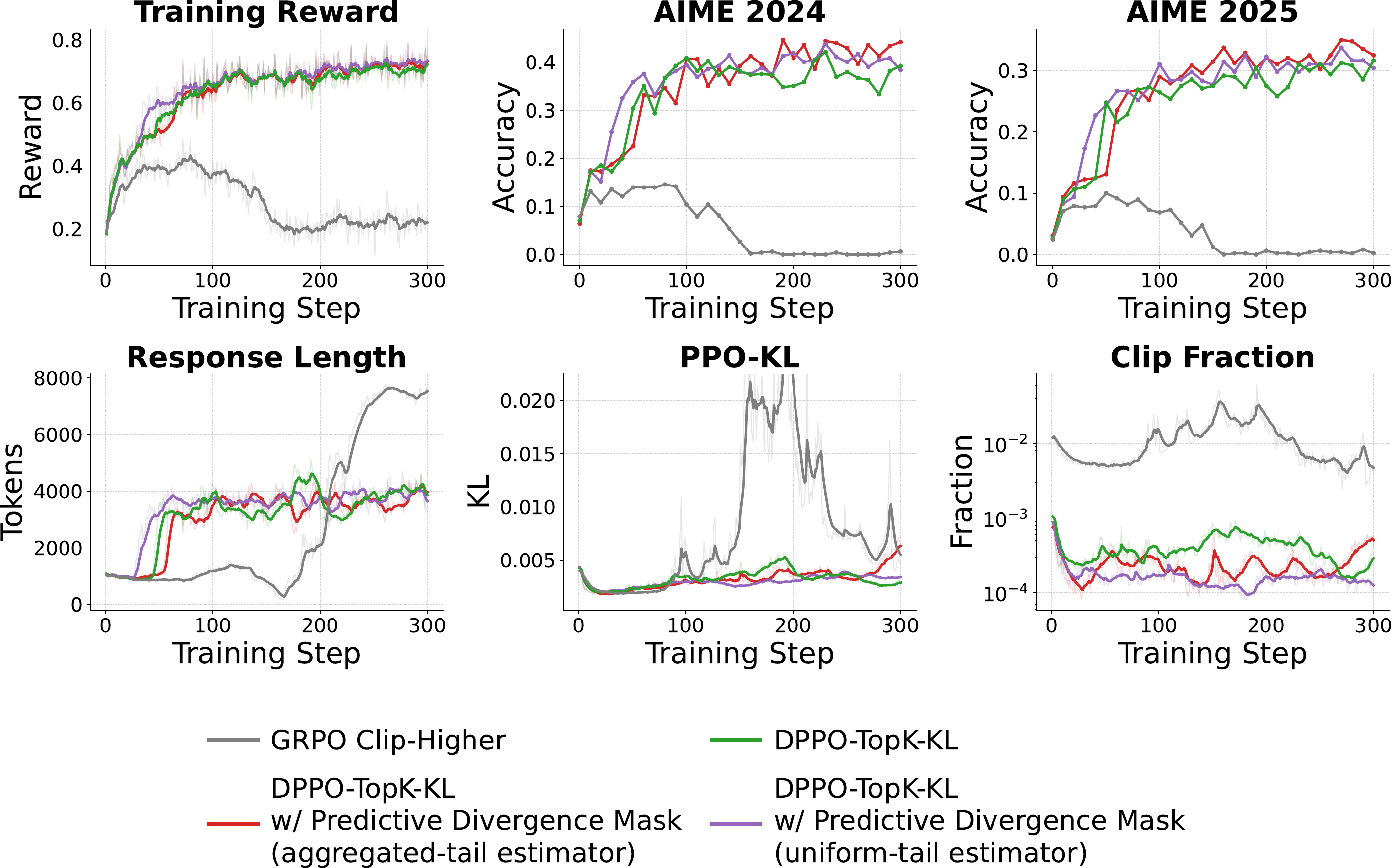}
    \caption{Training dynamics for Qwen3-30B-A3B-Base with FP8 Rollout at
    $K=20$ and $\delta=0.15$.}
    \label{fig:30ba3b_fp8rollout_full}
\end{figure}


\end{document}